\documentclass[letterpaper]{article} 
\usepackage[preprint]{aaai2027}  
\usepackage{times}  
\usepackage{helvet}  
\usepackage{courier}  
\usepackage[hyphens]{url}  
\usepackage{graphicx} 
\usepackage{natbib}  
\usepackage{caption} 
\usepackage{amsmath}
\usepackage{amssymb}
\usepackage{amsthm}
\usepackage{mathtools}
\mathtoolsset{showonlyrefs}
\usepackage{algorithm}
\usepackage{algorithmic}
\usepackage{booktabs}

\usepackage{newfloat}
\usepackage{listings}
\DeclareCaptionStyle{ruled}{labelfont=normalfont,labelsep=colon,strut=off} 
\floatstyle{ruled}
\newfloat{listing}{tb}{lst}{}
\floatname{listing}{Listing}
\usepackage{comment}

\title{Learning Optimal Dynamic Matching via Graph Neural Networks}
\author{
    Genta Okada\textsuperscript{\rm 1},
    Shunya Noda\textsuperscript{\rm 1},
    Junpei Komiyama\textsuperscript{\rm 2},
    Akira Matsushita\textsuperscript{\rm 1}
}
\affiliations{
    \textsuperscript{\rm 1}The University of Tokyo\\
    \textsuperscript{\rm 2}Mohamed bin Zayed University of Artificial Intelligence
}

\newtheorem{theorem}{Theorem}
\newtheorem{lemma}[theorem]{Lemma}

\newcommand{\R}{\mathbb{R}}
\newcommand{\Rp}{\mathbb{R}_{\ge 0}}
\newcommand{\N}{\mathbb{N}}
\newcommand{\E}{\mathbb{E}}
\newcommand{\Prob}{\mathbb{P}}
\newcommand{\X}{\mathcal{X}}
\newcommand{\G}{\mathcal{G}}
\newcommand{\M}{\mathcal{M}}
\newcommand{\res}{\mathbin{\ominus}}
\newcommand{\dd}{\mathrm{d}}
\newcommand{\ent}{\mathrm{ent}}
\newcommand{\exitmark}{\dagger}
\newcommand{\Exp}{\mathrm{Exp}}

\begin{document}

\maketitle

\begin{abstract}
Dynamic matching markets require decisions about whom to match and when: matching now yields value but removes participants who may create better future opportunities. We develop a value-based reinforcement-learning framework for this problem on finite, evolving weighted graphs. We study an infinite-horizon continuous-time model with stochastic arrivals, node-type transitions, edge realizations, and exogenous exits. We prove an event-time reduction: without loss of optimality, the planner acts immediately after each exogenous event and then waits for the next one. We further show that the optimal edge-wise $Q$-function is characterized by a single continuation-value function on post-decision residual graphs, reducing the learned object from state-action values to graph values. Exact action selection still requires combinatorial matching optimization; we approximate the value with a graph neural network, train it by temporal-difference learning, and use it in a forward-greedy matching heuristic. In a binary-type benchmark, the learned policy substantially outperforms immediate and threshold-greedy rules by preserving common nodes for rare arrivals of valuable matches while forming lower-value matches only in thick pools. In a kidney paired donation benchmark, it performs similarly to immediate greedy when exits are unpredictable, recovers the logic of patient matching when warnings are reliable, and outperforms the better of Immediate Greedy and Patient Greedy across intermediate warning probabilities. These results show that residual-graph value learning yields state-dependent dynamic matching policies that adapt to realized connectivity and exit information.
\end{abstract}


\section{Introduction}

Matching problems are central to market design and have been studied
in economics, operations research, and computer science.
Matching theory has provided some of the most successful and practically deployed tools for
allocating agents, objects, and opportunities, ranging from school choice and
labor markets to organ exchange. Much of the classical literature studies
static problems in which the set of nodes (representing agents or objects) and
edges (representing feasible or valuable matches) are fixed. Many important applications, however, are
inherently dynamic, meaning that the composition of the pool evolves over time as new nodes arrive and existing nodes leave.
In such environments, optimal matching requires more than simply choosing
which currently feasible matches to execute.
A planner must also decide \emph{when} to execute them, trading off immediate match
value against the option value of preserving market thickness
and the risk that current participants leave the market.

To see the tradeoff, consider kidney paired donation, where incompatible patient--donor pairs are connected when they can form a two-way exchange. Executing an available exchange immediately reduces the risk that a patient leaves unmatched. Yet it also removes both pairs from the pool and may foreclose valuable exchanges with future arrivals, especially for hard-to-match patients. Waiting can therefore improve match quality by preserving market thickness, but it exposes current patients to delay and departure risk. The same tension arises in ride sharing: assigning a driver immediately reduces rider waiting, whereas a delay may yield a closer driver and avoid an inefficient long-distance pickup \citep{10.1287/mnsc.2022.00096}. Analogous tradeoffs appear in online dating and other platform markets. Optimal policies must balance immediate match value against the continuation value from future opportunities.


Dynamic matching is computationally challenging: continuation values depend
on the combinatorial composition and realized structure of the pool, so even
with known arrival and edge-value distributions, deriving an optimal
state-dependent policy remains difficult.
Existing work clarifies the timing tradeoff but does not directly solve the
state-dependent problem here. Classical online matching typically requires
immediate and irrevocable decisions upon arrival, leaving no endogenous choice
of how long to preserve the pool \citep[see, e.g.,][]{huang2024online}. Dynamic-matching
studies analyze waiting, thickness, and abandonment, but their guarantees
exploit random-graph or asymptotic structure, or type-level relaxations, rather
than the realized graph
\citep{anderson2017efficient,akbarpour2020thickness,ashlagi2019matching,DBLP:journals/ior/AouadS22}.
Recent approximate dynamic programming and GNN methods broaden computational
scope, yet rely on structured value approximations or online bipartite arrival
models \citep{you2024approximate,DBLP:conf/icml/HayderiSVW24}. What remains
difficult is learning how each finite residual graph---including realized
connectivity, evolving node states, and exit information---shapes continuation
value.


This gap motivates a flexible, simulation-based approach that learns
continuation values directly from realized graph states.
This paper proposes a value-based computational framework for dynamic matching
using deep reinforcement learning with \emph{graph neural networks} (GNNs). We study an
infinite-horizon continuous-time model with stochastic entry, type changes,
edge realizations, and exogenous exit. We derive an event-time reduction:
after each exogenous event, it is without loss to choose a matching
immediately and then wait until the next event. The event-time reduction removes the need to optimize over arbitrary intervention times and yields a Bellman system in which each decision trades off the immediate payoff from matching against the continuation value of the post-decision residual graph.

We approximate this residual-graph value function by a GNN
and train it with a temporal-difference procedure \citep{sutton1988learning},
a simulation-based approximate dynamic-programming method. This design is motivated by
recent progress in automated mechanism design, where neural networks are used
to represent and optimize mechanisms over large design spaces \citep{dutting2019optimal}.
GNNs provide a scalable class of message-passing architectures for graph-structured inputs and are especially well suited to dynamic matching because unmatched
participants and feasible matches form a graph, and the value of a pool should
be invariant to arbitrary labels assigned to participants. Once learned, the
value function guides a forward-greedy approximation to the matching-level
Bellman maximization, avoiding exhaustive search over feasible matchings.

We evaluate the method in two environments. The first is a stylized
binary-type benchmark designed to isolate the timing tradeoff in dynamic
matching: although there are only two node types, good performance requires
preserving some common, slow-exiting nodes as future partners for rare and
perishable arrivals. In this benchmark, the learned policy discovers a state-dependent timing rule that selectively preserves common nodes for future rare arrivals while executing lower-value matches when the pool is sufficiently thick. It substantially outperforms both immediate greedy and threshold-greedy benchmarks, showing that the framework can discover sophisticated matching policies beyond fixed, hand-designed heuristics.

The second environment is a kidney paired donation (KPD) benchmark with heterogeneous patient--donor types, stochastic compatibility, edge values, and exogenous exit. In this environment, the learned policy uses both the informativeness of exit signals and the realized structure of the pool to determine when and whom to match. When exits are unpredictable, it behaves similarly to immediate greedy, because waiting provides little protection against sudden departures. When a critical-status flag reliably warns of exit, it recovers the central logic of patient matching \citep{akbarpour2020thickness}, preserving noncritical nodes to maintain market thickness while prioritizing nodes at imminent risk. The learned policy is not restricted to either heuristic, however: it can match hard-to-match nodes before they become critical and select among feasible critical-node matches according to their effects on the residual pool. In intermediate environments, where exit warnings arrive only probabilistically, it adapts the degree of waiting to the available information and outperforms both baselines. These results show that the framework can discover state-dependent matching policies tailored to the exit-information structure, rather than merely selecting the better of two fixed rules. More broadly, it provides a computational laboratory for studying how allocation, timing, and information interact in dynamic matching markets.


Our work contributes to online and dynamic matching by learning
continuation values for finite, realized weighted graphs. Recent studies
have applied GNNs and related neural architectures to non-static matching
problems
\citep{DBLP:conf/acml/BegnardiBJ023,DBLP:conf/icml/HayderiSVW24,
Liu2025linksage,DBLP:journals/tits/HuFL25,Xing2025RidePoolingSTGNN}.
Recent work also develops optimization-based and data-driven timing
policies, including supply-aware simulation and learned switching between
Greedy and Patient rules
\citep{DBLP:journals/ior/AouadS22,chen2025feature,eom2025batching,
liu2026datadriven,zhou2026hybrid}.
Our distinction is in the learned object: we learn a graph-level
continuation value for each realized post-decision residual graph. With an
appropriate graph encoder, decisions can adapt directly to the realized pool,
connectivity, node and edge attributes, type changes, and exit information.

\section{Problem}

We study an infinite-horizon dynamic matching problem in continuous time
$t\in \R_{\ge 0}$. A planner discounts future payoffs at rate
$\delta>0$. The \emph{environment} is defined as
$\mathcal{E}=(\X,\rho,F,\lambda,\mu,q,c,\delta)$. The set of node types
$\X$ is finite. 
We use the term ``exogenous event'' to denote changes in the state (i.e.,
entry and exit of nodes, and type changes of nodes). These events are
stochastic and occur at random times regardless of the planner's matching
decisions.
New entrants arrive at rate $\lambda>0$, and their
types are independently drawn from $\rho\in \Delta(\X)$. For each pair of types
$(x,y)\in \X^2$, $F(\cdot\mid x,y)$ is a probability distribution on
$\Rp \cup \{\bot\}$, where $\bot$ denotes infeasibility; a realization
$u\in \Rp$ denotes a feasible edge of weight $u$. We assume
$F(\cdot\mid x,y)=F(\cdot\mid y,x)$ for all $(x,y)\in \X^2$ and $\int_{\Rp} u F(\dd u\mid x, y) < \infty$. Each
unmatched node of type $x\in \X$ has a Poisson clock with rate
$\mu(x)>0$ and transition kernel
$q(\cdot\mid x)\in \Delta(\X\cup \{\exitmark\})$ over new types and
exogenous exit. If a node of type $x$ exits exogenously, the planner
incurs penalty $c(x)\ge 0$. Throughout, all Poisson processes are
independent, and all newly created edge values are conditionally
independent given the current node types.

This transition kernel enables us to model exit information explicitly.
Specifically, an observable critical-warning signal can be included as part of
the node type. A noncritical type may either transition to the corresponding
critical type or exit directly when its clock rings, while a critical type may
exit at a different hazard rate. Varying the probability of the transition to
the critical type therefore captures environments ranging from uninformed exit,
in which no warning is observed before departure, to informed exit, in which a
warning reliably arrives before departure.
Motivated by this interpretation, we do not redraw edge weights
after type transitions in any of our applications.


A \emph{state} is a finite typed weighted graph $G=(N,E,x,w)\in \G$, where $N\subset \N$ is a finite node set, $E\subseteq \binom{N}{2}$ is the set of feasible edges, $x:N\to \X$ assigns a type to each node, and $w:E\to \Rp$ assigns a realized weight to each feasible edge.

Note that the Markov state cannot be summarized by the profile of node types alone. Even when two pools have the same multiset of node types, they may have different realized feasible edges and different realized edge weights, and therefore different continuation values. For example, in KPD, a two-way exchange requires both cross-donations to pass pair-specific virtual crossmatches, so coarse patient--donor types alone do not determine whether the exchange is feasible. Hence the relevant state variable is the realized weighted graph itself.

Given a state $G$, the exogenous dynamics act as follows. New nodes enter according to a Poisson process with rate $\lambda$. When a new node $i^*\notin N$ enters, its type $y$ is drawn from $\rho$. For each existing node $j\in N$, a fresh edge realization $\tilde w_{i^* j}\sim F(\cdot\mid y,x_j)$ is drawn independently. If $\tilde w_{i^* j}=\bot$, no edge is
created; otherwise $\{i^*,j\}$ is added to the feasible-edge set with
weight $\tilde w_{i^* j}$.

Each remaining node $i\in N$ has an independent Poisson clock with rate $\mu(x_i)$. When the clock of node $i$ rings, $z\sim q(\cdot\mid x_i)$ is drawn. If $z=\exitmark$, then node $i$ exits exogenously, the planner incurs a penalty $c(x_i)$, and node $i$ together with all incident edges is removed from the graph. If $z=y\in \X$, then node $i$ changes type from $x_i$ to $y$. 
All existing feasible
edges and realized weights, including those incident to $i$, are left
unchanged.

For a state $G=(N,E,x,w)$, a feasible matching is
\begin{equation}
\M(G)
\coloneqq
\{m\subseteq E:\ 
e\cap e'=\emptyset \: \forall \text{distinct } e,e'\in m\}.
\end{equation}
The empty matching $\emptyset$ is allowed.

If the planner chooses a matching $m\in \M(G)$, it receives immediate
payoff
\begin{equation}
W(G,m)\coloneqq\sum_{e\in m} w_e.
\end{equation}
Let $\nu(m)$ denote the set of matched nodes in $m$. The residual graph after matching $m$ is $G\res m$, defined as the induced subgraph obtained by deleting all nodes in $\nu(m)$
together with all incident edges.

An admissible policy is a non-anticipative, possibly randomized rule that chooses feasible matchings at stopping times adapted to the filtration generated by past exogenous events and past actions. Any private randomization used by the planner is assumed independent of the primitive Poisson clocks and edge draws.

For an admissible policy $\pi$, let $(\sigma_k,M_k)_{k\ge 0}$ denote the
sequence of matching interventions it generates, where $\sigma_k$ is a
stopping time, $G_{\sigma_k}$ is the graph observed immediately before
that intervention, and $M_k\in \M(G_{\sigma_k})$.
Let $(\eta_\ell,X_\ell)_{\ell\ge 1}$ denote the sequence of exogenous exit
events, where $X_\ell$ is the type of the exiting node immediately before
exit at time $\eta_\ell$. The expected discounted payoff from the initial graph
$G$ is
\begin{equation}
    J^\pi(G) = 
    \E^\pi_G
\biggl[\sum_{k\ge 0} e^{-\delta\sigma_k} W(G_{\sigma_k},M_k)-\sum_{\ell\ge 1} e^{-\delta\eta_\ell} c(X_\ell)
\biggr].\label{eq:continuous_time_objective}
\end{equation}
Let $U(G)$ denote the optimal value of the full continuous-time problem
from a pre-decision graph $G$,
\begin{equation}
U(G)\coloneqq\sup_{\pi} J^\pi(G).
\end{equation}

\section{Method}
\label{sec:method}


\subsection{Event-time Reduction}\label{subsec:reduction}

We first present a few analytical results to simplify our learning problem.
Let $0=\tau_0<\tau_1<\tau_2<\cdots$ denote the sequence of exogenous event
times (i.e.,
entry and exit of nodes, and type changes of nodes). For an event-time policy, let $G_n$ be the pre-decision graph
observed immediately after the $n$-th exogenous event, let
$m_n\in \M(G_n)$ be the matching chosen at time $\tau_n$, and let
$R_n\coloneqq G_n\res m_n$ be the (post-decision) \emph{residual graph}.

For a residual graph $R=(N,E,x,w)$, define the \emph{total hazard} by $\Lambda(R)\coloneqq\lambda+\sum_{i\in N}\mu(x_i)$.
The waiting time to the next exogenous event, conditional on $R$, satisfies
\begin{equation}
\Delta_n\coloneqq\tau_{n+1}-\tau_n \sim \Exp(\Lambda(R)),
\end{equation}
where $\Exp(\Lambda(R))$ denotes the exponential distribution with rate $\Lambda(R)$.
The associated event mark $\xi_n$, which identifies the next exogenous event, takes values in
\begin{equation}
\Xi(R)\coloneqq\{\ent\}\cup \{(i,z): i\in N(R),\ z\in \X\cup\{\exitmark\}\}.
\end{equation}
Under competing Poisson hazards, the probability of the entry event is
\begin{equation}
\Prob(\xi_n=\ent\mid R_n=R)=\frac{\lambda}{\Lambda(R)},
\end{equation}
and for each node $i\in N(R)$ and mark $z\in \X\cup\{\exitmark\}$, the probability of the event that node $i$'s clock rings and yields mark $z$ is
\begin{equation}
\Prob(\xi_n=(i,z)\mid R_n=R)
=
\frac{\mu(x_i)\,q(z\mid x_i)}{\Lambda(R)}.
\end{equation}
Moreover, $\Delta_n$ and $\xi_n$ are independent conditional on $R_n$.

Let $T(R,\xi_n)$ denote the random next pre-decision graph obtained from
$R$ after applying event mark $\xi_n$ together with the auxiliary edge
draws, if any, induced by entry. Define the payoff of an exit event as
\begin{equation}
r^{\mathrm{ex}}(R,\xi_n)
\coloneqq
\begin{cases}
-c(x_i), & \text{if }\xi_n=(i,\exitmark),\\
0, & \text{otherwise.}
\end{cases}
\end{equation}

For such an event-time policy, the value $J^\pi(G_0)$ can be written as
\begin{align}
&J^\pi(G_0)\\
& =  
\E^\pi_{G_0}
\biggl[
\sum_{n=0}^\infty
e^{-\delta \tau_n}
\Bigl(W(G_n,m_n)+
e^{-\delta \Delta_n}r^{\mathrm{ex}}(R_n,\xi_n)
\Bigr)
\biggr].
\end{align}
Let $V(R)$ denote the post-decision value obtained when the planner takes
no further matching action from graph $R$ until the next exogenous event.
Then
\begin{equation}
V(R)
=
\E\!\left[
e^{-\delta \Delta_R}
\bigl(
r^{\mathrm{ex}}(R,\xi_R)+U(T(R,\xi_R))
\bigr)
\ \middle|\ R
\right],
\label{eq:bellman_post_raw}
\end{equation}
where $\Delta_R\sim \Exp(\Lambda(R))$ and $\xi_R$ is the corresponding
event mark. Since $\Delta_R$ and $\xi_R$ are independent conditional on
$R$,
\begin{align}
V(R)
&=
\Gamma(R)\,
\E\!\left[
r^{\mathrm{ex}}(R,\xi_R)+U(T(R,\xi_R))
\ \middle|\ R
\right],
\label{eq:bellman_post}
\\
\Gamma(R)
&\coloneqq
\frac{\Lambda(R)}{\delta+\Lambda(R)}.
\end{align}

\begin{lemma}[No profitable interior delay]
\label{lem:no_delay}
Fix a residual graph $R\in\G$. Any plan that waits until a positive
interior time to execute a planned matching, unless an exogenous event
arrives first, has continuation value at most
\begin{equation}
\max\left\{
V(R),\,
\max_{m\in \M(R)\setminus\{\emptyset\}}
\bigl(
W(R,m)+U(R\res m)
\bigr)
\right\}.
\label{eq:no_delay_bound}
\end{equation}
\end{lemma}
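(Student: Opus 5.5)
Fix a plan that, from residual graph $R$ at time $0$, commits to executing a nonempty matching $m\in\M(R)$ at a deterministic interior time $s>0$ unless an exogenous event arrives first; a randomized $s$ or $m$ is a mixture of such plans and is handled at the end. Write $\Delta=\Delta_R\sim\Exp(\Lambda(R))$ and $\xi=\xi_R$. The plan's value splits on whether $\Delta\le s$ or $\Delta>s$. On $\{\Delta\le s\}$ no matching is executed before the event: the planner collects $r^{\mathrm{ex}}(R,\xi)$ at time $\Delta$ and faces the pre-decision graph $T(R,\xi)$, so its continuation is at most $U(T(R,\xi))$. On $\{\Delta>s\}$ the graph at time $s$ is still $R$, since edges do not change between events. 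The planner receives $W(R,m)$ and faces $R\res m$, whose continuation is at most $U(R\res m)$ by the definition of $U$ as a supremum. Here I use that the problem is time-homogeneous, so the continuation from any stopping time is bounded by $U$ of the current graph, discounted by $e^{-\delta\cdot\text{time}}$. This gives
\begin{equation}
\text{value}\le \E\bigl[e^{-\delta\Delta}\mathbf 1\{\Delta\le s\}\bigl(r^{\mathrm{ex}}(R,\xi)+U(T(R,\xi))\bigr)\bigr]+e^{-(\delta+\Lambda(R))s}\,\bigl(W(R,m)+U(R\res m)\bigr).
\end{equation}

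Next, use the conditional independence of $\Delta$ and $\xi$ to factor the first term as $A(s)\,B$. Here $B\coloneqq\E[r^{\mathrm{ex}}+U(T(R,\xi))\mid R]=V(R)/\Gamma(R)$ by \eqref{eq:bellman_post}, and
\begin{equation}
A(s)=\int_0^s \Lambda e^{-(\delta+\Lambda)t}\,\dd t=\Gamma(R)\bigl(1-e^{-(\delta+\Lambda)s}\bigr).
\end{equation}
Setting $p\coloneqq e^{-(\delta+\Lambda)s}\in(0,1)$ and $Q\coloneqq W(R,m)+U(R\res m)$, the bound becomes
\begin{equation}
(1-p)V(R)+pQ\le\max\{V(R),Q\}\le\max\Bigl\{V(R),\max_{m'\ne\emptyset}\bigl(W(R,m')+U(R\res m')\bigr)\Bigr\}.
\end{equation}
This is a convex combination, so it is at most the maximum of its two terms. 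Randomizing over $(s,m)$, or choosing the action adaptively, averages such bounds, and an average is again at most the same maximum.

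The main obstacle is justifying the step "continuation $\le U(\cdot)$ discounted" rigorously under general admissible policies. It requires a strong-Markov/time-shift argument: after the stopping time $\min(\Delta,s)$, the remainder of any admissible policy, conditioned on the history, is an admissible policy from the current graph, because the private randomization is independent and the Poisson clocks are memoryless. I would state this as a dynamic-programming inequality and cite the memorylessness of the competing exponential clocks.

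Two integrability points also need checking. First, $U$ must be finite, which holds because $W$ has finite mean, arrivals are Poisson, and $\delta>0$; this lets the expectations be split. Second, a plan that executes several matchings at time $s$ is equivalent to executing their union, which is still a feasible matching.
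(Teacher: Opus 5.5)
Your proposal is correct and follows the paper's proof essentially step for step. You split on whether the exogenous event precedes the scheduled time, use the independence of $\Delta_R$ and $\xi_R$ with the exponential law to put weight $1-e^{-(\delta+\Lambda(R))s}$ on $V(R)$ and $e^{-(\delta+\Lambda(R))s}$ on $W(R,m)+U(R\res m)$, bound the convex combination by the larger term, and average over any randomization in $(\Sigma,M)$. The one difference is that you write the decomposition as an inequality justified by a time-shift (dynamic-programming) argument, whereas the paper states it as an equality, implicitly assuming optimal continuation.
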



Lemma~\ref{lem:no_delay} shows that the first planned matching
intervention at a strictly positive interior time can be weakly improved
by either waiting for the next exogenous event or executing that matching
immediately and continuing optimally.
In other words, making a matching decision between exogenous events does not improve the value.
We now apply this observation
recursively. Since every nonempty matching removes at least two nodes, the
recursion terminates after finitely many immediate interventions. This
yields the following event-time reduction.

\begin{theorem}[Event-time reduction]
\label{thm:event_time}
For every pre-decision graph $G\in \G$,
\begin{equation}
U(G)
=
\max_{m\in \M(G)}
\left\{
W(G,m)+V(G\res m)
\right\}.
\label{eq:bellman_pre}
\end{equation}
\end{theorem}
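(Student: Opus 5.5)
We prove the two inequalities separately.

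\textbf{Lower bound.} We show $U(G)\ge \max_{m}\{W(G,m)+V(G\res m)\}$. Fix $m\in\M(G)$ and $\varepsilon>0$. Consider the policy that executes $m$ at time $0$ and then waits until the first exogenous event $\tau_1$. At $\tau_1$ the state is $T(G\res m,\xi)$, and from there the policy follows an $\varepsilon$-optimal admissible policy for that state.

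This requires a measurable selection of near-optimal policies. The state space is countable up to edge weights, and one can condition on the realized graph. Concatenating the pieces at the stopping time $\tau_1$ gives an admissible policy. Its value is $W(G,m)$ plus the expectation in \eqref{eq:bellman_post_raw} computed with $U-\varepsilon$ in place of $U$. This equals $W(G,m)+V(G\res m)-\varepsilon\,\E[e^{-\delta\Delta}]$. Letting $\varepsilon\to0$ and maximizing over $m$ gives the bound.

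\textbf{Upper bound.} We show $U(G)\le \max_{m}\{W(G,m)+V(G\res m)\}$. Fix any admissible $\pi$ and decompose its behaviour on $[0,\tau_1]$.

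Combine all interventions executed at time $0$, possibly several sequential ones, into a single matching $m_0\in\M(G)$. Its payoff is $W(G,m_0)$ because sequential matchings at the same instant are disjoint. Let $R=G\res m_0$.

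After time $0$ and before $\tau_1$, the planner's information is only $R$ and elapsed time, plus private randomization. The next planned intervention is therefore a (randomized) deterministic time $s\in(0,\infty]$, executed unless an exogenous event arrives first. Conditioning on the private randomization, the strong Markov property at $\tau_1$ bounds the continuation after an event by $U$. Hence the continuation value from $R$ under $\pi$ is at most the value of such a plan.

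By Lemma~\ref{lem:no_delay}, this is at most $\max\{V(R),\max_{m'\neq\emptyset}(W(R,m')+U(R\res m'))\}$. The case $s=\infty$ gives exactly $V(R)$.

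\textbf{Recursion.} Define the operator $\Phi(G)=\max_m\{W(G,m)+V(G\res m)\}$. We argue by induction on $|N(G)|$ that $U(G)\le\Phi(G)$.

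Note that $U(G)\le \Phi(G)$ cannot be used directly for $U(R\res m')$ unless $R\res m'$ is smaller, which it is: a nonempty $m'$ removes at least two nodes. So the induction hypothesis gives $U(R\res m')\le\Phi(R\res m')$. Then
\[
W(G,m_0)+W(R,m')+U(R\res m') \le W(G,m_0)+W(R,m')+W(R\res m',m'')+V(R\res m'\res m'')
\]
for the maximizing $m''$. Here $m_0\cup m'\cup m''\in\M(G)$ and the payoffs add, so this is at most $\Phi(G)$.

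In the other branch, $W(G,m_0)+V(R)\le\Phi(G)$ directly. Taking the supremum over $\pi$ gives $U(G)\le\Phi(G)$.

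\textbf{Remaining details.} The base case is a graph with fewer than two nodes, where only $\emptyset$ is feasible. There the argument reduces to Lemma~\ref{lem:no_delay} with $V$ alone. Finiteness of $U$ follows from the integrability of $F$ and discounting.

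\textbf{Main obstacle.} The hardest step is the rigorous reduction of a general admissible policy's behaviour between $0$ and $\tau_1$ to the form covered by Lemma~\ref{lem:no_delay}. This requires showing that stopping times with respect to the filtration, before the first event, are deterministic times given private randomization. I would try first to invoke the standard structure of stopping times for marked point process filtrations, which are deterministic before the first jump.

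A secondary issue is the measurability and admissibility of the concatenated $\varepsilon$-optimal policy in the lower bound. To handle it, I would use countable approximation of edge weights or an exact recursive construction.
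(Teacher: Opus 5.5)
Your proposal is correct and follows essentially the same route as the paper: induction on $|N(G)|$, merging all time-$0$ interventions into one matching $m_0$, conditioning on the private randomization so that the first strictly positive intervention occurs at a deterministic time, applying Lemma~\ref{lem:no_delay} to $G\res m_0$, and invoking the induction hypothesis on the strictly smaller graph left after the nonempty second matching, with the lower bound coming from the match-then-wait policy. Your $\varepsilon$-optimal continuation in the lower bound is more careful than the paper's proof, which simply asserts that the match-then-wait policy attains $W(G,m)+V(G\res m)$ without saying how optimal play after the next event is achieved.
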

Note that $\max_{m\in \M(G)}$ in \eqref{eq:bellman_pre} includes the empty matching $m=\emptyset$.
Theorem~\ref{thm:event_time} states that the optimal policy can be represented as a sequence of event-time decisions: after each exogenous event, the planner chooses one matching immediately and then waits until the next exogenous event. The planner never needs to plan a delayed match that executes at an interior time between exogenous events.
Since $\M(G)$ is finite for each finite graph, an optimal event-time policy exists.

Together, \eqref{eq:bellman_post} and \eqref{eq:bellman_pre} form the
Bellman system for the event-time Markov decision problem.

\subsection{Reduction to Residual-graph Values}\label{subsec:reduction_q_to_v}

The post-decision formulation above implies a useful equivalent edge-wise
representation. Within a single event epoch, starting from a graph $G$, the
planner may repeatedly choose either a feasible edge
$e\in E(G)$ or the stop action $\perp$. Choosing edge $e$ yields
immediate payoff $w_e$ and transitions instantaneously to the residual
graph $G\res \{e\}$. Choosing $\perp$ yields no immediate matching payoff
and hands the process over to the exogenous dynamics.

Let $Q(G,e)$ for $e\in E(G)$ and $Q(G,\perp)$ denote the optimal
action values in this edge-wise representation.

\begin{theorem}[Reduction to residual-graph values]
\label{thm:q_to_v}
For every graph $G\in \G$, the following identities hold:
\begin{align}
Q(G,e)
&=
w_e+U(G\res \{e\}),
\qquad\text{ for } e\in E(G),
\label{eq:q_edge}
\\
Q(G,\perp)
&=
V(G).
\label{eq:q_stop}
\end{align}
Moreover,
\begin{align}
U(G)
&=
\max\Bigl(
\{Q(G,\perp)\}\cup\{Q(G,e):e\in E(G)\}
\Bigr)
\nonumber\\
&=
\max_{m\in \M(G)}
\left\{
W(G,m)+V(G\res m)
\right\}.
\label{eq:q_to_v_reduction}
\end{align}
\end{theorem}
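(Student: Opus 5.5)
We prove Theorem~\ref{thm:q_to_v} by viewing the edge-wise representation as a finite-horizon deterministic decision problem within a single event epoch, spliced onto the event-time Bellman system of Theorem~\ref{thm:event_time}. The key observation is that within an epoch, the state moves deterministically through $G \to G\res\{e_1\} \to G\res\{e_1,e_2\}\to\cdots$. Each step removes two nodes, so every intra-epoch sequence of edge choices has length at most $\lfloor |N(G)|/2\rfloor$ before $\perp$ is chosen, or before no edge remains and $\perp$ is forced. The optimal edge-wise value of a pre-decision graph, call it $\tilde U(G)\coloneqq \max(\{Q(G,\perp)\}\cup\{Q(G,e)\})$, is therefore well defined by backward induction on $|N(G)|$, given the values after $\perp$.

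\textbf{Step 1 (the stop action).} Choosing $\perp$ hands the process to the exogenous dynamics from the current graph $G$, viewed now as a residual graph. The next pre-decision graph is $T(G,\xi)$ after time $\Delta\sim\Exp(\Lambda(G))$, with exit payoff $r^{\mathrm{ex}}$. Subsequent play is optimal from $T(G,\xi)$, so
\[
Q(G,\perp)=\E\bigl[e^{-\delta\Delta}(r^{\mathrm{ex}}(G,\xi)+\tilde U(T(G,\xi)))\bigr].
\]
If $\tilde U=U$ on all graphs, this is exactly \eqref{eq:bellman_post_raw}, giving \eqref{eq:q_stop}.

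\textbf{Step 2 (edge actions).} Choosing $e$ is instantaneous with payoff $w_e$ and leads to the pre-decision graph $G\res\{e\}$ within the same epoch. By the Bellman principle for the edge-wise MDP, $Q(G,e)=w_e+\tilde U(G\res\{e\})$, and \eqref{eq:q_edge} again follows once $\tilde U=U$.

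\textbf{Step 3 (identifying $\tilde U$ with $U$).} By induction on the number of nodes, we show
\[
\tilde U(G)=\max_{m\in\M(G)}\{W(G,m)+V(G\res m)\},
\]
which equals $U(G)$ by Theorem~\ref{thm:event_time}. For the inductive step, note that any sequence $e_1,\dots,e_k$ followed by $\perp$ is a sequence of disjoint edges, hence a matching $m$. It earns $W(G,m)$ followed by the stop value at $G\res m$. Conversely, every matching $m\in\M(G)$ can be realized by listing its edges in any order and then stopping. Unrolling the recursion of Steps 1–2, $\tilde U(G)$ is the maximum over $m$ of $W(G,m)+Q(G\res m,\perp)$. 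The edge-wise intra-epoch sequences thus correspond exactly to matchings (up to ordering, which does not affect payoff, since matching is instantaneous and undiscounted). This also yields the two equalities in \eqref{eq:q_to_v_reduction}.

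\textbf{Main obstacle: circularity.} Steps 1–2 require $\tilde U=U$ at future graphs, while Step 3 requires $Q(\cdot,\perp)=V$. The induction on node count handles only the intra-epoch part; the infinite-horizon coupling needs a fixed-point argument. I would treat both $U$ and $\tilde U$ as fixed points of the same Bellman operator
\[
\mathcal T f(G)=\max_m\Bigl\{W(G,m)+\Gamma(G\res m)\,\E\bigl[r^{\mathrm{ex}}+f(T(G\res m,\xi))\bigr]\Bigr\},
\]
where $\Gamma<1$. One caveat is that $\Gamma(R)\to1$ as $|N|\to\infty$, so uniform contraction can fail. I would therefore argue instead via policy equivalence. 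Both problems have the same set of admissible event-time policies (edge-wise policies induce matching policies and vice versa), and they induce the same payoff streams, so their optimal values coincide: $\tilde U=U$. Combined with Theorem~\ref{thm:event_time}, the policy-equivalence step is where the care goes, since one must check that the randomization and measurability conventions agree between the two representations.
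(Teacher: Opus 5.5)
Your proposal is correct and follows the paper's proof: $Q(G,e)=w_e+U(G\res\{e\})$ comes from the instantaneous, undiscounted transition, $Q(G,\perp)=V(G)$ from the definition of the waiting stage, and the rest from the order-independent correspondence between edge sequences ending in $\perp$ and matchings, closed off by Theorem~\ref{thm:event_time}. The one difference is that the paper never introduces a separate $\tilde U$. It treats the edge-wise representation as a relabeling of the event-time problem, whose continuation after $\perp$ is $V$ by definition, so your circularity concern and policy-equivalence resolution make explicit a justification the paper leaves implicit rather than giving a different argument.
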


Thus the optimal edge-wise $Q$-function is characterized by the single
post-decision value function $V$, but recovering it requires evaluating $U$
through the matching maximization in \eqref{eq:q_to_v_reduction}. If $R_n$ is the realized residual graph at event
epoch $n$, $\xi_n$ is the realized next event mark, and
$G_{n+1}=T(R_n,\xi_n)$ is the realized next pre-decision graph, then a
natural TD update for the exact Bellman system is
\begin{align}
&V_{n+1}(R_n)-V_n(R_n)\\
&= \alpha_n
\biggl[
\Gamma(R_n)
\Bigl(
r^{\mathrm{ex}}(R_n,\xi_n)+U_n(G_{n+1})
\Bigr)
-
V_n(R_n)
\biggr],
\end{align}
where $\alpha_n\in (0,1)$ is a step size and
\begin{equation}
U_n(G)
=
\max_{m\in \M(G)}
\left\{
W(G,m)+V_n(G\res m)
\right\}.
\end{equation}
This formulation avoids parameterizing the full state-action space of
edge-wise $Q$-values, but retains a combinatorial maximization, which we
approximate below.

\subsection{Value Approximation}\label{sec:value_approx}

Even after applying these reductions, the learning problem
remains very large. The possible combinations of existing node types grow
combinatorially with the size of the pool. Moreover, in many
applications, realized edge values are naturally continuous. 
For example, in kidney exchange, an edge value can summarize
multiple clinical determinants of expected transplant outcomes. A tabular
representation of $V$ is therefore infeasible. 

This motivates approximating the value function with a GNN. The role of the
neural network is not to impose a structural interpretation on hidden units.
Rather, it provides a scalable parametric class for representing continuation
values on variable-size graphs. GNNs are natural in this setting because they
share parameters across nodes and use the realized edge structure in message
passing. Before pooling, their node representations are equivariant to node
relabeling; after pooling, they produce permutation-invariant graph-level
representations. Thus, they can use the realized graph structure directly while
using far fewer parameters than an unrestricted representation of graph values.

We approximate the post-decision value function by a GNN
$V_\theta:\G\to\R$ defined on residual graphs. Each residual graph
$R=(N,E,x,w)$ is encoded by node features and a directed edge list containing
both orientations of every feasible undirected edge. The directed edges may
additionally carry edge features, including the realized weight $w_e$. The
node features encode type information relevant for type transitions, exits,
and compatibility with future arrivals. When realized edge weights are
informative for continuation value, an edge-aware message-passing operator
can condition on these features. The residual-value reduction and training
procedure do not otherwise depend on this encoding choice.

For the reported experiments, the value network consists of three
graph-convolution layers, global additive pooling, and a two-layer readout. Let $d_x$ denote the node-feature
dimension and let $h$ denote the hidden dimension. The first graph-convolution
layer maps $d_x$ input features to $h$ hidden units, and the next two layers
map $h$ to $h$. Each graph-convolution layer is followed by a ReLU
nonlinearity. If $z_i$ denotes the final embedding of node $i$, the graph
representation is $z_R = \sum_{i\in N(R)} z_i$.
The additive pooling operation makes the graph-level representation invariant
to node relabeling and allows the same network to be evaluated on residual
graphs of different sizes. The pooled embedding is passed through a fully
connected hidden layer with ReLU activation and then through a scalar output
layer. The implemented value function has the form
\begin{equation}
V_\theta(R)
=
a^\top \sigma(Az_R+b)+b_0+\kappa,
\end{equation}
where $\sigma$ is ReLU and $\kappa$ is a learned scalar offset. Throughout the experiments, we set $h=64$.

\subsection{Matching Policy}\label{subsec:matching_policy}

Given $V_\theta$, exact plug-in action selection maximizes
$F_\theta^G(m)\coloneqq W(G,m)+V_\theta(G\res m)$ over $m\in\M(G)$.
This remains combinatorial because the nonlinear GNN term need not decompose
into edge weights. We therefore use forward-greedy ascent. For a partial
matching $m$ and $H=G\res m$, adding $e\in E(H)$ changes $F_\theta^G$ by
$w_e+V_\theta(H\res\{e\})-V_\theta(H)$. Hence the procedure adds an edge
maximizing $w_e+V_\theta(H\res\{e\})$ if this score exceeds $V_\theta(H)$,
deletes its endpoints, and repeats; otherwise it stops. Let $m_\theta(G)$
denote its output. Each accepted edge increases the same plug-in Bellman
objective, so the output has no profitable feasible single-edge addition under
$F_\theta^G$, but it need not be globally optimal because earlier choices are
not revisited and a profitable combination may have no profitable first edge.

Under the fixed evaluation policy and after initialization, each event-time
pre-decision graph is obtained by a single-node arrival, exit, or type change
from the residual graph at which the previous search stopped. Thus forward
local reoptimization fits the one-at-a-time Poisson event structure, although
this provides no optimality guarantee. All reported simulations use this
heuristic and assess its performance against benchmark policies.

\subsection{Training}\label{subsec:training}

Training uses the residual-graph decomposition in Theorem~\ref{thm:q_to_v},
with its exact matching maximization replaced by the forward-greedy procedure.
 The agent maintains an online value network
$V_\theta$ and a target network $V_{\bar\theta}$, where $\bar{\theta}$ is a delayed copy of the online parameters $\theta$.
The online network is used for action selection, while the target network is
used only for evaluation.
A training episode first
runs the simulator for a warm-up period and then records one transition per
exogenous event. Each experience has the form $(R_n,\gamma_n,r^{\mathrm{ex}}_n,G_{n+1})$, where $R_n$ is the post-decision residual graph, $\gamma_n=\Gamma(R_n)$ is the state-dependent effective discount factor, $r^{\mathrm{ex}}_n$ is the realized exogenous-event reward, and $G_{n+1}$ is the next pre-decision graph.

For each sampled next pre-decision graph $G$, the online network selects
$m_\theta(G)$ using the procedure above. The target network then evaluates the
plug-in value of the selected matching:
\begin{equation}
    \widehat U(G)
    =
    W(G,m_\theta(G))
    +
    V_{\bar\theta}\bigl(G\res m_\theta(G)\bigr).
\end{equation}
This approximate pre-decision value gives the TD target
\begin{equation}
    y_n
    =
    \gamma_n
    \left(
    r^{\mathrm{ex}}_n+\widehat U(G_{n+1})
    \right).
\end{equation}
The online network is trained to predict this target at the sampled
post-decision residual graph.

Experiences are stored in a prioritized replay buffer
\citep{schaul2016prioritized}. A minibatch of size $32$ is sampled using
proportional prioritization with exponent $0.6$. 
The importance-sampling exponent is annealed from 0.4 toward 1.0.
Priorities are updated after
each gradient step using the absolute TD error, clipped at one and shifted
by $0.01$. The network minimizes the importance-weighted squared TD error
\[
\mathcal L(\theta)
=
\frac{1}{B}\sum_{i=1}^B
\omega_i
\left(
V_\theta(R_i)-y_i
\right)^2,
\]
where $\omega_i$ is the importance-sampling weight of sample $i$. The network
parameters are updated by stochastic gradient steps using the Adam optimizer
\citep{kingma2015adam}.

Exploration is performed by an $\varepsilon$-greedy version of the same
edge-wise matching procedure. With probability $\varepsilon$, the agent
chooses the stop action with a benchmark-specific probability and otherwise
samples uniformly from the currently feasible edges. With probability
$1-\varepsilon$, it takes the greedy action under $V_\theta$. The exploration
rate is multiplied by a decay factor after each gradient update and is
bounded below by a minimum value.

Our training procedure shares several components with DQN \citep{DBLP:journals/nature/MnihKSRVBGRFOPB15}, including an evolving $\varepsilon$-greedy behavior policy, experience replay, and a lagged target network. Unlike DQN, however, we learn a post-decision residual-graph value function rather than a state-action $Q$-function.

\begin{table*}[t]
\centering
\caption{Normalized performance relative to Immediate Random and Omniscient Bound}
\label{tab:performance_gap_closed}
\begin{tabular*}{0.8\textwidth}{lcccccc}
\toprule
Environment
& \shortstack{Immediate\\Random}
& \shortstack{Immediate\\Greedy}
& \shortstack{Immediate\\Threshold Greedy}
& \shortstack{Patient\\Greedy}
& GNN
& \shortstack{Tabular\\Optimal} \\
\midrule
Binary           & 0.00 & 0.01 & 0.27 & --   & 0.61 & 0.61 \\
KPD (uninformed) & 0.00 & 0.12 & 0.12 & --   & 0.12 & --   \\
KPD (informed)   & 0.00 & 0.12 & 0.12 & 0.40 & 0.44 & --   \\
\bottomrule
\end{tabular*}

\vspace{0.5em}
\begin{minipage}{0.95\textwidth}
\footnotesize
\emph{Note.}
Within each environment, rewards are normalized so Immediate Random equals 0
and Omniscient Bound equals 1. The latter observes all future arrivals, edge
realizations, and exits and solves the offline matching problem.
\end{minipage}
\end{table*}

\section{Binary Type Benchmark}
\label{subsec:binary_benchmark}

\subsection{Setting}

We first study a simple stylized benchmark with two node types and deterministic edge weights. 
The discount rate is $\delta = 0.002$. There are two node types,
$\X=\{h,l\}$, where $h$ denotes a rare, high-value and more perishable
type, while $l$ denotes a common, lower-value and more persistent type.
Nodes arrive at rate $\lambda=2.0$, with $\rho(l)=0.7$ and $\rho(h)=0.3$.
Unmatched nodes exit exogenously with type-specific hazards $\mu(l)=0.1$ and $\mu(h)=0.5$, with zero exit penalty, and no type transitions. Edges are generated independently conditional on node types. The edge
existence probabilities are $\phi_{hh} = 0.05$, $\phi_{hl} = 0.95$, and $\phi_{ll} = 0.8$. Edge weights are specified by node types: $w_{hh} = 5$, $w_{hl} = 5$, and $w_{ll} = 1$.

\subsection{Results}

Table~\ref{tab:performance_gap_closed} reports normalized discounted
rewards. ``Immediate Random'' selects a uniformly random feasible edge
whenever one exists, while ``Immediate Greedy'' immediately executes a
maximum-weight matching. ``Immediate Threshold Greedy (TG)'' greedily
matches edges above an optimally chosen threshold; under homogeneous exit
rates, it is a 2-approximation and optimal among greedy policies
\citep{DBLP:journals/ior/AouadS22,arnosti2025greedydynamicmatching}.
``GNN'' is the forward-greedy, value-guided policy above. For the binary
benchmark, exact tabular value iteration gives ``Tabular Optimal''; it is
infeasible for KPD.


\begin{figure}[t]
    \centering
    \includegraphics[width=\linewidth]{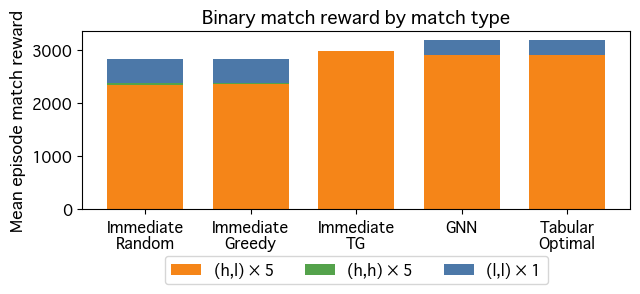}
    \caption{Reward composition by match type in the binary benchmark}
    \label{fig:binary_match_composition}
\end{figure}

The central dynamic tradeoff is how aggressively to use $l$-nodes. Type-$h$ nodes are rare and relatively perishable, while $h$--$l$ matches are both
likely and valuable. Accordingly, the planner benefits from preserving $l$-nodes as
potential partners for future $h$-nodes. Matching every available $l$--$l$
edge too quickly sacrifices these future opportunities. Preservation should
not be absolute, however: once sufficiently many $l$-nodes have accumulated
in the pool, the marginal value of an additional reserve is small, and
executing some low-value $l$--$l$ matches becomes desirable. A good policy
must therefore condition its willingness to form $l$--$l$ matches on the
current thickness of the pool.

Figure~\ref{fig:binary_match_composition} shows that the GNN policy achieves this
balance more effectively than Greedy or Threshold Greedy. Greedy uses
$l$-nodes too readily, whereas Threshold Greedy forms exclusively
$h$--$l$ matches, consistent with preserving $l$-nodes even when the pool is
already thick enough to make some $l$--$l$ matching worthwhile. GNN balances these two: most of its realized matches are the valuable
$h$--$l$ matches, but it also matches $l$--$l$ when the market is sufficiently thick.
Consequently, relative to Threshold Greedy, GNN obtains a similar reward
contribution from $h$--$l$ matches while also extracting substantial value
from $l$--$l$ matches. This state-dependent use of $l$-nodes is reflected
in its substantially higher normalized performance in
Table~\ref{tab:performance_gap_closed}, which is virtually identical to
that of Tabular Optimal. Thus, despite relying on function
approximation and forward-greedy action selection, GNN achieves
near-optimal performance in this benchmark.

Overall, the binary-type benchmark shows that the proposed method learns
more than a myopic preference for high current edge weights. Even in this
minimal two-type environment, the learned value function guides a policy
that manages market thickness in a dynamically meaningful way.

\section{Kidney Paired Donation}
\label{subsec:kidney_benchmark}

\subsection{Setting}

The kidney paired donation (KPD) benchmark is meant to create a simple but clinically interpretable dynamic matching problem. Each node represents
an incompatible patient--donor pair, and feasible edges represent possible
two-way exchanges. Node types retain the main coarse sources of
heterogeneity in KPD: patient ABO type, sensitization, donor ABO type,
donor age, and donor sex. 
Feasible edges are realized through ABO compatibility and virtual
crossmatch, while an edge-specific HLA draw contributes to their value.

The type distribution and rewards are calibrated from coarse empirical and
clinical regularities. Patient ABO and sensitization frequencies follow the
standard kidney-exchange simulation calibration of
\citet{roth2007efficient}, with high sensitization motivated by observed
waiting-time heterogeneity in KPD pools \citep{holscher2018matchrates}.
Donor age and sex are chosen to match broad living-donor patterns
\citep{lentine2023adrkidney}. The KPD-entry filter reflects the clinical
fact that exchange pools are populated by pairs with blood-type or crossmatch incompatibility \citep{roth2007efficient,leeser2012living}; ABO
compatibility follows the standard transplant compatibility chart
\citep{unos_abo_chart}. 
Edge values use the KLY (\emph{kidney life-years}) regression coefficients of
\citet{milner2016hla}. Donor age and sex are encoded in the endpoint node
types, whereas HLA match quality is drawn at the edge level.
KLY is an estimate of the kidney life-years expected from a transplant; thus it provides a clinically interpretable proxy for the value of a feasible
exchange.

For the KPD value network, an architecture ablation found that edge-weight
features left evaluation reward unchanged but reduced training stability, so
we report the encoding based on node features and feasible-edge structure.
This choice affects only continuation-value approximation: realized weights
still enter immediate payoffs and all forward-greedy edge scores exactly.
Because donor age and sex remain available through endpoint features, the only
incremental information in edge weights is edge-specific HLA. This HLA
information did not improve performance, plausibly because any feasible
exchange is highly valuable relative to HLA variation among feasible edges,
so residual-pool value depends mainly on connectivity and exit information.

The simulation parameters are chosen so that residual graphs contain
enough simultaneous feasible edges for policy choices to matter. In
particular, the arrival rate and crossmatch-pass probabilities are set to
produce a dense enough pool, while exit hazards keep the problem dynamic.
We evaluate both an uninformed-exit KPD environment and an informed-exit
variant in which a \emph{critical flag} gives the planner advance warning that a
node is likely to leave soon. The complete parameterization is given in
Appendix~\ref{app:kpd_details}.

\subsection{Results}

The discounted rewards achieved are presented in Table~\ref{tab:performance_gap_closed}. In the uninformed-exit KPD benchmark, Immediate Greedy, Immediate Threshold Greedy, and GNN achieve similar performance. Within this benchmark, it is difficult for a planner to control market thickness, and the learned continuation value offers little gain over repeatedly implementing the highest-value
currently available exchanges when exit risk is unobservable.

In the informed-exit KPD benchmark, we add Patient Greedy as a baseline. It
waits while all nodes are noncritical; once a critical node appears, it
immediately chooses the highest-weight feasible edge incident to a critical
node. Because every exit is preceded by a warning, waiting while all nodes are
noncritical preserves market thickness without risking an unobserved
departure; the critical transition signals that further delay has become
costly. This implements the logic of patient policies
\citep{akbarpour2020thickness,kakimura2026dynamic}: preserve the pool while
waiting is safe and intervene when exit becomes imminent.

The GNN follows the same broad timing pattern and therefore often makes similar
choices, helping explain why it performs similarly to Patient Greedy and why
both substantially outperform Immediate Greedy, which removes nodes before
better future matching opportunities develop. However, we observe two state-dependent deviations from Patient Greedy that may
help explain the GNN's modest reward advantage. First, when a critical node is
present, the GNN sometimes chooses a slightly lower-weight edge incident to
the critical node to avoid leaving another node isolated and potentially
unmatchable. Second, it sometimes matches hard-to-match pairs before they
become critical. For example, type-O patients can receive only from type-O
donors, so their pairs are less likely to gain new feasible edges; once a
valuable edge is available, waiting may add discounting costs without
materially improving future opportunities.

\begin{figure}
    \centering
    \includegraphics[width=\columnwidth]{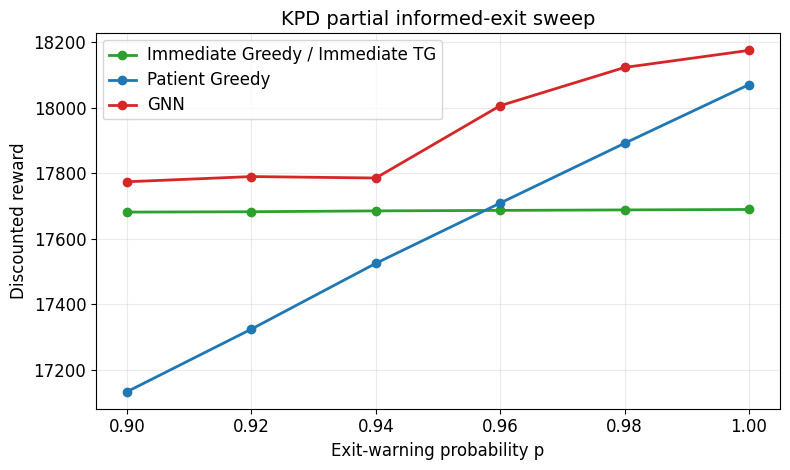}
    \caption{Discounted reward by exit-warning probability}
    \label{fig:kpd_intermediate}
\end{figure}

Finally, we examine whether the learned policy adapts flexibly to intermediate exit-information environments. When the clock of a
noncritical node rings, the node becomes critical with probability $p$ and
exits immediately with probability $1-p$. We refer to $p$ as the \emph{exit-warning probability}.
The case $p=0$ corresponds to uninformed exit, whereas
$p=1$ corresponds to informed exit. As $p$
increases, a patient policy should become more attractive.

Figure~\ref{fig:kpd_intermediate} reports the results. 
Patient Greedy performs poorly for small $p$ because many nodes exit
unmatched without exit warning, whereas the policy matches only
after a critical node arises.
By contrast, Immediate Greedy does not use the
critical flag and is insensitive to $p$. The optimal threshold is zero for
every $p$, so Immediate Threshold Greedy coincides with Immediate Greedy throughout.
In our simulations, Patient Greedy becomes comparable to Immediate Greedy
only around $p=0.96$. GNN outperforms the better of the two baselines.
When warnings are relatively rare, it behaves more like an immediate policy; when
warnings are reliable, it recovers the logic of patient matching. These
results suggest that GNN successfully learns a matching policy that adapts to the
degree of exit information available in the environment.

\section{Conclusion}

We proposed a graph neural reinforcement-learning framework for dynamic
matching on evolving weighted graphs. An event-time reduction shows that the
planner can choose a matching immediately after each exogenous event and then
wait for the next, expressing the problem through post-decision residual-graph
values and shifting learning from state-action to graph values. A GNN
approximates these values, while forward-greedy search handles combinatorial
action selection.

The learned policy captures state dependence absent from fixed rules. In the
binary-type benchmark, it balances current match value against market
thickness; in KPD, it adapts to exit-warning information and residual-pool
structure. It matches or outperforms Greedy and Patient Greedy. Although these
comparisons do not establish optimality, they show that residual-graph value
guidance can exploit economically meaningful state dependence.


\section*{Acknowledgments}

We are grateful to Itai Ashlagi, Michihiro Kandori, Yash Kanoria, and Fuhito Kojima for helpful comments. This work has been supported by JST PRESTO Grant Number JPMJPR2368 and JST ERATO Grant Number JRMJER2301, Japan. All remaining errors are our own.

\bibliography{ref}

@article{akbarpour2020thickness,
  title={Thickness and information in dynamic matching markets},
  author={Akbarpour, Mohammad and Li, Shengwu and Gharan, Shayan Oveis},
  journal={Journal of Political Economy},
  volume={128},
  number={3},
  pages={783--815},
  year={2020},
  publisher={The University of Chicago Press Chicago, IL}
}

@article{huang2024online,
  author  = {Huang, Zhiyi and Tang, Zhihao Gavin and Wajc, David},
  title   = {Online matching: A brief survey},
  journal = {ACM SIGecom Exchanges},
  volume  = {22},
  number  = {1},
  pages   = {135--158},
  year    = {2024},
  doi     = {10.1145/3699824.3699837}
}

@article{you2024approximate,
  author  = {You, Fan and Vossen, Thomas},
  title   = {An approximate dynamic programming approach to dynamic
             stochastic matching},
  journal = {INFORMS Journal on Computing},
  volume  = {36},
  number  = {4},
  pages   = {1006--1022},
  year    = {2024},
  doi     = {10.1287/ijoc.2021.0203}
}

@inproceedings{schaul2016prioritized,
  title={Prioritized experience replay},
  author={Schaul, Tom and Quan, John and Antonoglou, Ioannis and Silver, David},
  booktitle={International Conference on Learning Representations},
  year={2016},
  url={https://arxiv.org/abs/1511.05952}
}

@inproceedings{kingma2015adam,
  title={Adam: a method for stochastic optimization},
  author={Kingma, Diederik P. and Ba, Jimmy},
  booktitle={International Conference on Learning Representations},
  year={2015},
  url={https://arxiv.org/abs/1412.6980}
}

@inproceedings{dutting2019optimal,
  title={Optimal auctions through deep learning},
  author={D{\"u}tting, Paul and Feng, Zhe and Narasimhan, Harikrishna and Parkes, David and Ravindranath, Sai Srivatsa},
  booktitle={International Conference on Machine Learning},
  pages={1706--1715},
  year={2019},
  organization={PMLR}
}

@article{sutton1988learning,
  title={Learning to predict by the methods of temporal differences},
  author={Sutton, Richard S},
  journal={Machine Learning},
  volume={3},
  number={1},
  pages={9--44},
  year={1988},
  publisher={Springer}
}

@article{anderson2017efficient,
  title={Efficient dynamic barter exchange},
  author={Anderson, Ross and Ashlagi, Itai and Gamarnik, David and Kanoria, Yash},
  journal={Operations Research},
  volume={65},
  number={6},
  pages={1446--1459},
  year={2017},
  publisher={INFORMS}
}

@article{ashlagi2019matching,
  title={On matching and thickness in heterogeneous dynamic markets},
  author={Ashlagi, Itai and Burq, Maximilien and Jaillet, Patrick and Manshadi, Vahideh},
  journal={Operations Research},
  volume={67},
  number={4},
  pages={927--949},
  year={2019},
  publisher={INFORMS}
}

@unpublished{kakimura2026dynamic,
author = {Kakimura, Naonori and Zhu, Donghao},
title = {Dynamic bipartite matching markets with stochastic arrivals and departures},
note = {Mathematics of Operations Research, forthcoming},
year = {2026}
}

@article{roth2007efficient,
  author = {Roth, Alvin E. and S{\"o}nmez, Tayfun and {\"U}nver, M. Utku},
  title = {Efficient kidney exchange: coincidence of wants in markets with compatibility-based preferences},
  journal = {American Economic Review},
  year = {2007},
  volume = {97},
  number = {3},
  pages = {828--851},
  doi = {10.1257/aer.97.3.828},
  url = {https://www.aeaweb.org/articles?id=10.1257/aer.97.3.828}
}

@article{holscher2018matchrates,
  author = {Holscher, Courtenay M. and Jackson, Kyle and Chow, Eric K. H. and Thomas, Alvin G. and Haugen, Christine E. and DiBrito, Sandra R. and Purcell, Carlin and Ronin, Matthew and Waterman, Amy D. and Garonzik Wang, Jacqueline and Massie, Allan B. and Gentry, Sommer E. and Segev, Dorry L.},
  title = {Kidney exchange match rates in a large multicenter clearinghouse},
  journal = {American Journal of Transplantation},
  year = {2018},
  volume = {18},
  number = {6},
  pages = {1510--1517},
  doi = {10.1111/ajt.14689},
  url = {https://pubmed.ncbi.nlm.nih.gov/29437286/}
}

@article{leeser2012living,
  author = {Leeser, David B. and Aull, Meredith J. and Afaneh, Cheguevara
          and Dadhania, Darshana and Charlton, Marian
          and Walker, Jennifer K. and Hartono, Choli and Serur, David
          and Del Pizzo, Joseph J. and Kapur, Sandip},
  title = {Living donor kidney paired donation transplantation: experience as a founding member center of the National Kidney Registry},
  journal = {Clinical Transplantation},
  year = {2012},
  volume = {26},
  number = {3},
  pages = {E213--E222},
  doi = {10.1111/j.1399-0012.2012.01606.x},
  url = {https://pubmed.ncbi.nlm.nih.gov/22872872/}
}

@article{milner2016hla,
  author = {Milner, John and Melcher, Marc L. and Lee, Brian and Veale, Jeffrey and Ronin, Matthew and D'Alessandro, Tom and Hil, Garet and Fry, Phillip C. and Shannon, Patrick W.},
  title = {{HLA} matching trumps donor age: donor-recipient pairing characteristics that impact long-term success in living donor kidney transplantation in the era of paired kidney exchange},
  journal = {Transplantation Direct},
  year = {2016},
  volume = {2},
  number = {7},
  pages = {e85},
  doi = {10.1097/TXD.0000000000000597},
  url = {https://pubmed.ncbi.nlm.nih.gov/27830179/}
}

@article{lentine2023adrkidney,
  author = {Lentine, Krista L. and Smith, Jodi M. and Miller, Jonathan M. and Bradbrook, Keighly and Larkin, Lindsay and Weiss, Samantha and Handarova, Dzhuliyana K. and Temple, Kayla and Israni, Ajay K. and Snyder, Jon J.},
  title = {{OPTN/SRTR} 2021 annual data report: kidney},
  journal = {American Journal of Transplantation},
  year = {2023},
  volume = {23},
  number = {2 Suppl 1},
  pages = {S21--S120},
  doi = {10.1016/j.ajt.2023.02.004},
  url = {https://www.sciencedirect.com/science/article/pii/S1600613523002514}
}

@misc{unos_abo_chart,
  author = {{United Network for Organ Sharing}},
  title = {Organ transplant: donor and recipient {ABO} blood type compatibility chart},
  year = {2026},
  howpublished = {PDF},
  url = {https://unos.org/wp-content/uploads/ABO-Donor-Recipient-Compatibility-Chart-v6-1.pdf},
  note = {Accessed 2026-04-16}
}

@article{DBLP:journals/ior/AouadS22,
  author       = {Ali Aouad and
                  {\"{O}}mer Sarita{\c{c}}},
  title        = {Dynamic stochastic matching under limited time},
  journal      = {Operations Research},
  volume       = {70},
  number       = {4},
  pages        = {2349--2383},
  year         = {2022},
  url          = {https://doi.org/10.1287/opre.2022.2293},
  doi          = {10.1287/OPRE.2022.2293},
  bibsource    = {dblp computer science bibliography, https://dblp.org}
}

@inproceedings{DBLP:conf/acml/BegnardiBJ023,
  author       = {Luca Begnardi and
                  Hendrik Baier and
                  Willem van Jaarsveld and
                  Yingqian Zhang},
  editor       = {Berrin Yanikoglu and
                  Wray L. Buntine},
  title        = {Deep reinforcement learning for two-sided online bipartite matching in collaborative order picking},
  booktitle    = {Asian Conference on Machine Learning, {ACML} 2023, 11-14 November
                  2023, Istanbul, Turkey},
  series       = {Proceedings of Machine Learning Research},
  pages        = {121--136},
  publisher    = {{PMLR}},
  year         = {2023},
  url          = {https://proceedings.mlr.press/v222/begnardi24a.html},
  bibsource    = {dblp computer science bibliography, https://dblp.org}
}

@inproceedings{DBLP:conf/icml/HayderiSVW24,
  author       = {Alexandre Hayderi and
                  Amin Saberi and
                  Ellen Vitercik and
                  Anders Wikum},
  editor       = {Ruslan Salakhutdinov and
                  Zico Kolter and
                  Katherine A. Heller and
                  Adrian Weller and
                  Nuria Oliver and
                  Jonathan Scarlett and
                  Felix Berkenkamp},
  title        = {{MAGNOLIA:} matching algorithms via {GNNs} for online value-to-go approximation},
  booktitle    = {Forty-first International Conference on Machine Learning, {ICML} 2024,
                  Vienna, Austria, July 21-27, 2024},
  series       = {Proceedings of Machine Learning Research},
  pages        = {17759--17782},
  publisher    = {{PMLR} / OpenReview.net},
  year         = {2024},
  url          = {https://proceedings.mlr.press/v235/hayderi24a.html},
  bibsource    = {dblp computer science bibliography, https://dblp.org}
}

@article{DBLP:journals/tits/HuFL25,
  author       = {Yulong Hu and
                  Siyuan Feng and
                  Sen Li},
  title        = {{BMG-Q:} localized bipartite match graph attention {Q}-learning for ride-pooling order dispatch},
  journal      = {{IEEE} Transactions on Intelligent Transportation Systems},
  volume       = {26},
  number       = {10},
  pages        = {15407--15421},
  year         = {2025},
  url          = {https://doi.org/10.1109/TITS.2025.3595653},
  doi          = {10.1109/TITS.2025.3595653},
  bibsource    = {dblp computer science bibliography, https://dblp.org}
}

@article{Xing2025RidePoolingSTGNN,
  author  = {Xing, Xue and Peng, Yuqi and Wan, Le and Luo, Fahui},
  title   = {Optimization of two-passenger ride-pooling orders based on {ST-GNN} and path optimization},
  journal = {PLOS ONE},
  year    = {2025},
  volume  = {20},
  number  = {12},
  pages   = {e0337415},
  doi     = {10.1371/journal.pone.0337415},
  url     = {https://doi.org/10.1371/journal.pone.0337415}
}

@article{chen2025feature,
  author  = {Chen, Yilun and Kanoria, Yash and Kumar, Akshit
             and Zhang, Wenxin},
  title   = {Feature-based dynamic matching},
  journal = {Operations Research},
  volume  = {74},
  number  = {2},
  pages   = {788--803},
  year    = {2026},
  doi     = {10.1287/opre.2024.0730}
}

@unpublished{liu2026datadriven,
  author  = {Liu, Weiliang and Ward, Amy R. and Zhang, Xun},
  title   = {Data-driven matching for impatient and heterogeneous
             demand and supply},
  note= {Mathematics of Operations Research, forthcoming},
  year    = {2026}
}

@inproceedings{Liu2025linksage,
author = {Liu, Ping and Wei, Haichao and Hou, Xiaochen and Shen, Jianqiang and He, Shihai and Shen, Qianqi and Chen, Zhujun and Borisyuk, Fedor and Hewlett, Daniel and Wu, Liang and Veeraraghavan, Srikant and Tsun, Alex and Jiang, Chengming and Zhang, Wenjing},
title = {{LinkSAGE}: optimizing job matching using graph neural networks},
year = {2025},
isbn = {9798400712456},
publisher = {Association for Computing Machinery},
address = {New York, NY, USA},
url = {https://doi.org/10.1145/3690624.3709396},
doi = {10.1145/3690624.3709396},
booktitle = {Proceedings of the 31st ACM SIGKDD Conference on Knowledge Discovery and Data Mining V.1},
pages = {2448–2457},
numpages = {10},
location = {Toronto ON, Canada},
series = {KDD '25}
}

@inproceedings{zhou2026hybrid,
  author    = {Zhou, Ruiqi and Zhu, Donghao and Shen, Houcai},
  title     = {A learning-based hybrid decision framework for
               matching systems with user departure detection},
  booktitle = {Human Interface and the Management of Information},
  series    = {Lecture Notes in Computer Science},
  volume    = {16706},
  pages     = {633--651},
  publisher = {Springer},
  year      = {2026},
  doi       = {10.1007/978-3-032-29178-3_42}
}

@article{eom2025batching,
  author  = {Eom, Myungeun and Toriello, Alejandro},
  title   = {Batching and greedy policies: How good are they
             in dynamic matching?},
  journal = {Manufacturing \& Service Operations Management},
  volume  = {28},
  number  = {2},
  pages   = {479--495},
  year    = {2026},
  doi     = {10.1287/msom.2024.1074}
}

@article{10.1287/mnsc.2022.00096,
author = {Castillo, Juan Camilo and Knoepfle, Dan and Weyl, E. Glen},
title = {Matching and pricing in ride hailing: wild goose chases and how to solve them},
year = {2025},
issue_date = {May 2025},
publisher = {INFORMS},
address = {Linthicum, MD, USA},
volume = {71},
number = {5},
issn = {0025-1909},
url = {https://doi.org/10.1287/mnsc.2022.00096},
doi = {10.1287/mnsc.2022.00096},
journal = {Management Science},
month = may,
pages = {4377–4395},
numpages = {19}
}

@article{DBLP:journals/nature/MnihKSRVBGRFOPB15,
  author       = {Volodymyr Mnih and
                  Koray Kavukcuoglu and
                  David Silver and
                  Andrei A. Rusu and
                  Joel Veness and
                  Marc G. Bellemare and
                  Alex Graves and
                  Martin A. Riedmiller and
                  Andreas Fidjeland and
                  Georg Ostrovski and
                  Stig Petersen and
                  Charles Beattie and
                  Amir Sadik and
                  Ioannis Antonoglou and
                  Helen King and
                  Dharshan Kumaran and
                  Daan Wierstra and
                  Shane Legg and
                  Demis Hassabis},
  title        = {Human-level control through deep reinforcement learning},
  journal      = {Nat.},
  volume       = {518},
  number       = {7540},
  pages        = {529--533},
  year         = {2015},
  url          = {https://doi.org/10.1038/nature14236},
  doi          = {10.1038/NATURE14236},
  bibsource    = {dblp computer science bibliography, https://dblp.org}
}

@misc{arnosti2025greedydynamicmatching,
      title={Greedy dynamic matching}, 
      author={Nick Arnosti and Felipe Simon},
      year={2025},
      eprint={2507.04551},
      archivePrefix={arXiv},
      primaryClass={cs.DS},
      url={https://arxiv.org/abs/2507.04551}, 
}


\appendix

\section{Proofs}

\subsection{Proof of Lemma~\ref{lem:no_delay}}

\begin{proof}
Let $\Delta_R\sim \Exp(\Lambda(R))$ be the waiting time to the next
exogenous event, let $\xi_R$ be its mark, and write
\[
Y_R\coloneqq r^{\mathrm{ex}}(R,\xi_R)+U(T(R,\xi_R)).
\]
Consider a plan that schedules matching $M\in\M(R)$ at time $\Sigma$ if
no exogenous event occurs first. With the convention
$e^{-a\cdot \infty}=0$ for $a>0$, its continuation value is
\begin{align}
&\E\!\left[
e^{-\delta \Delta_R}Y_R\mathbf 1\{\Delta_R\le \Sigma\}
\ \middle|\ R
\right]
\nonumber\\
&\quad{}
+
\E\!\left[
\begin{gathered}
e^{-\delta \Sigma}
\bigl(W(R,M)+U(R\res M)\bigr)\\
\times \mathbf 1\{\Sigma<\Delta_R\}
\end{gathered}
\ \middle|\ R
\right].
\end{align}

Condition on a realization $(\Sigma,M)=(s,m)$. Since
$(\Sigma,M)$ is independent of $(\Delta_R,\xi_R)$ conditional on $R$,
and since $\Delta_R\sim \Exp(\Lambda(R))$,
\begin{align}
&\E\!\left[
e^{-\delta \Delta_R}Y_R\,\mathbf 1\{\Delta_R\le s\}
\ \middle|\ R,\Sigma=s,M=m
\right]
\nonumber\\
&=
\int_0^s e^{-\delta t}\Lambda(R)e^{-\Lambda(R)t}\,\dd t
\cdot \E[Y_R\mid R]
\nonumber\\
&=
\left(
\int_0^s e^{-(\delta+\Lambda(R))t}\Lambda(R)\,\dd t
\right)\E[Y_R\mid R].
\label{eq:delay_part1}
\end{align}
By \eqref{eq:bellman_post_raw},
\begin{equation}
V(R)
=
\int_0^\infty e^{-(\delta+\Lambda(R))t}\Lambda(R)\,\dd t
\cdot \E[Y_R\mid R].
\end{equation}
Therefore
\begin{align}
&\E\!\left[
e^{-\delta \Delta_R}Y_R\mathbf 1\{\Delta_R\le s\}
\ \middle|\ R,\Sigma=s,M=m
\right]
\nonumber\\
&\quad=
\bigl(1-e^{-(\delta+\Lambda(R))s}\bigr)V(R).
\label{eq:delay_part1_simplified}
\end{align}
Similarly,
let $A_m\coloneqq W(R,m)+U(R\res m)$. Then
\begin{align}
&\E\!\left[
e^{-\delta s}
A_m
\mathbf 1\{s<\Delta_R\}
\ \middle|\ R,\Sigma=s,M=m
\right]
\label{eq:delay_part2}
\\
&\qquad=
e^{-\delta s}\Prob(\Delta_R>s\mid R)
A_m
\nonumber\\
&\qquad=
e^{-(\delta+\Lambda(R))s} A_m.
\nonumber
\end{align}
Combining \eqref{eq:delay_part1_simplified} and \eqref{eq:delay_part2}
shows that, conditional on $(\Sigma,M)$, the plan has value
\begin{equation}
V(R)
+
e^{-(\delta+\Lambda(R))\Sigma}
\bigl(
W(R,M)+U(R\res M)-V(R)
\bigr).
\end{equation}
This is a convex combination of $V(R)$ and $W(R,M)+U(R\res M)$, and is
therefore bounded above by the larger of the two. Taking expectations over
$(\Sigma,M)$ gives \eqref{eq:no_delay_bound}.
\end{proof}

\subsection{Proof of Theorem~\ref{thm:event_time}}

\begin{proof}
We prove \eqref{eq:bellman_pre} by induction on the number of nodes
$n\coloneqq|N(G)|$.

\medskip
\noindent
\paragraph{Base case}
If $n\le 1$, then no nonempty matching is feasible, so
$\M(G)=\{\emptyset\}$. Hence the planner cannot obtain any immediate
matching payoff and must wait until the next exogenous event. Therefore
\begin{equation}
U(G)=V(G)=W(G,\emptyset)+V(G\res \emptyset),
\end{equation}
which is exactly \eqref{eq:bellman_pre}.

\medskip

\paragraph{Induction step}
Fix $n\ge 2$, and assume that \eqref{eq:bellman_pre} holds for every
graph with fewer than $n$ nodes. Let $G$ be a graph with $|N(G)|=n$,
and let $\pi$ be any admissible policy.

Condition on a realization of the private randomization used by $\pi$.
After conditioning, the policy becomes deterministic between exogenous
events. Along the counterfactual path on which no exogenous event occurs
after the current date, $\pi$ therefore induces a deterministic sequence
of matching actions scheduled at deterministic times. Since every nonempty
matching removes at least two nodes, there can be only finitely many
nonempty matching actions before the next exogenous event on this
counterfactual path.

Let $m_0\in \M(G)$ be the union of all matchings that $\pi$ executes at
the current time $0$. Because matched nodes are removed immediately, these
matchings are pairwise disjoint, so their union is again a matching in $G$. Let $R_0\coloneqq G\res m_0$. If $\pi$ takes no further matching action before the next exogenous event,
then its continuation value from the current date is exactly
\begin{align}
&W(G,m_0)+V(R_0)\\
& \le \max_{m\in \M(G)}
\left\{
W(G,m)+V(G\res m)
\right\}.
\end{align}

Now suppose instead that $\pi$ schedules a further matching action before
the next exogenous event. Let $\sigma>0$ be the first strictly positive
scheduled intervention time on the no-event path, and let
$m_1\in \M(R_0)$ be the matching that $\pi$ plans to execute at time
$\sigma$. Since nothing happens between time $0$ and time $\sigma$ on
the no-event path, $m_1$ is feasible in $R_0$. Applying
Lemma~\ref{lem:no_delay} to the graph $R_0$ with the deterministic choice
$(\Sigma,M)=(\sigma,m_1)$ yields
\begin{align}
J^\pi(G)
&\le
\label{eq:event_time_induction_step}
\max\left\{
\begin{array}{@{}l@{}}
W(G,m_0)+V(R_0),\\
W(G,m_0)+W(R_0,m_1)\\
\quad{}+U(R_0\res m_1)
\end{array}
\right\}
\\
&=
\max\left\{
\begin{array}{@{}l@{}}
W(G,m_0)+V(R_0),\\
W(G,m_0\cup m_1)\\
\quad{}+U(G\res (m_0\cup m_1))
\end{array}
\right\}.
\nonumber
\end{align}
The first term is already bounded above by the right-hand side of
\eqref{eq:bellman_pre}. Since $m_1\neq \emptyset$, the graph $G_1\coloneqq G\res (m_0\cup m_1)$ has strictly fewer than $n$ nodes. Hence the induction hypothesis implies
\begin{equation}
U(G_1)
=
\max_{\hat m\in \M(G_1)}
\left\{
W(G_1,\hat m)+V(G_1\res \hat m)
\right\}.
\end{equation}
Substituting this into the second term of
\eqref{eq:event_time_induction_step} gives
\begin{align}
\label{eq:event_time_induction_finish}
&W(G,m_0\cup m_1)+U(G_1)
\\
&=
W(G,m_0\cup m_1)
+
\max_{\hat m\in \M(G_1)}
\left\{
W(G_1,\hat m)+V(G_1\res \hat m)
\right\}
\\
&=
\max_{\hat m\in \M(G_1)}
\left\{
\begin{aligned}
&W(G,m_0\cup m_1)+W(G_1,\hat m)\\
&\quad{}+V(G_1\res \hat m)
\end{aligned}
\right\}
\\
&\le
\max_{m\in \M(G)}
\left\{
W(G,m)+V(G\res m)
\right\}.
\end{align}
Combining the preceding bounds shows that, conditional on the realized
private randomization of $\pi$,
\begin{equation}
J^\pi(G)
\le
\max_{m\in \M(G)}
\left\{
W(G,m)+V(G\res m)
\right\}.
\end{equation}
Since the right-hand side does not depend on the realization of the private
randomization, the same inequality holds after integrating over that
randomization. Because $\pi$ was arbitrary, we conclude that
\begin{equation}
U(G)
\le
\max_{m\in \M(G)}
\left\{
W(G,m)+V(G\res m)
\right\}.
\end{equation}

For the reverse inequality, fix any $m\in \M(G)$. The policy that
executes $m$ immediately at the current event time and then waits until
the next exogenous event is admissible and yields value
\begin{equation}
W(G,m)+V(G\res m).
\end{equation}
Taking the maximum over $m\in \M(G)$ gives
\begin{equation}
U(G)
\ge
\max_{m\in \M(G)}
\left\{
W(G,m)+V(G\res m)
\right\}.
\end{equation}
This proves \eqref{eq:bellman_pre}.

Finally, $\M(G)$ is finite for every finite graph $G$, so the maximum in
\eqref{eq:bellman_pre} is attained. Choosing, for example, the
lexicographically smallest maximizer after each realized exogenous event
defines an optimal event-time policy. By construction, that policy acts only
immediately after exogenous events and takes exactly one matching decision at
each such event time.
\end{proof}

\subsection{Proof of Theorem~\ref{thm:q_to_v}}

\begin{proof}
If the planner chooses $e\in E(G)$, then it receives payoff $w_e$
immediately and moves deterministically to $G\res \{e\}$. Since this
transition is instantaneous, no discount is incurred between $G$ and
$G\res \{e\}$. Therefore
\begin{equation}
Q(G,e)=w_e+U(G\res \{e\}),
\qquad e\in E(G),
\end{equation}
which proves \eqref{eq:q_edge}.

If the planner chooses $\perp$, then no further matching action is taken
in the current event epoch and the process enters the post-decision waiting
stage from the same graph $G$. By definition,
\begin{equation}
Q(G,\perp)=V(G),
\end{equation}
which proves \eqref{eq:q_stop}.

Now consider any feasible finite sequence of edge actions $e_1,e_2,\dots,e_K,\perp$ starting from $G$. Because the endpoints of $e_k$ are deleted as soon as $e_k$ is chosen, no later edge can share an endpoint with any earlier edge. Hence the edges $e_1,\dots,e_K$ are pairwise disjoint, and
\begin{equation}
m\coloneqq\{e_1,\dots,e_K\}\in \M(G).
\end{equation}
Since all edge choices occur instantaneously within the same event epoch,
the value of the sequence is
\begin{equation}
\sum_{k=1}^K w_{e_k}+V(G\res m)
=
W(G,m)+V(G\res m).
\end{equation}
This value depends only on the underlying matching $m$, not on the order
in which its edges are chosen.

Conversely, fix any matching $m\in \M(G)$, and write it as
$m=\{e_1,\dots,e_K\}$ in an arbitrary order. Since the edges in $m$ are
pairwise disjoint, the sequence $e_1,e_2,\dots,e_K,\perp$ is feasible and yields value $W(G,m)+V(G\res m)$. 
Therefore maximizing over feasible edge-action sequences is equivalent to
maximizing over feasible matchings. By
Theorem~\ref{thm:event_time}, $U(G) = \max_{m\in \M(G)} \left\{W(G,m)+V(G\res m)\right\}$,
and hence
\begin{equation}
U(G)=
\max\Bigl(
\{Q(G,\perp)\}\cup\{Q(G,e):e\in E(G)\}
\Bigr).
\end{equation}
This proves \eqref{eq:q_to_v_reduction}.

Finally, the formulas above show that once $V$ is known, one recovers
$U$ from \eqref{eq:bellman_pre} and then recovers $Q(G,e)$ and
$Q(G,\perp)$ from \eqref{eq:q_edge}--\eqref{eq:q_stop}. Conversely, given
the optimal $Q$-function, one recovers $V(G)$ from $Q(G,\perp)$ and
$U(G)$ from the maximum over $Q(G,\perp)$ and the values $Q(G,e)$ for
$e\in E(G)$.
Thus the optimal edge-wise $Q$-function is uniquely determined by the
single post-decision value function $V$.
\end{proof}

\section{Kidney Exchange Simulation Details}
\label{app:kpd_details}

This appendix gives the full parameterization of the KPD environments used
in the experiments. Each node is an incompatible patient--donor pair. Let
$\mathcal B\coloneqq\{O,A,B,AB\}$ be the set of ABO blood types. A patient
type is $P=(b^P,s)\in \mathcal B\times\{L,H\}$, where $s=L$ denotes a
lower-sensitized patient and $s=H$ denotes a highly sensitized patient. A
donor type is $D=(b^D,a,g)\in\mathcal B\times\{30,45,60\}\times\{F,M\}$,
where the donor-age support is represented by coarse age-band midpoints.
The node type is $X=(P,D)$.

The patient ABO probabilities are $(0.48,0.34,0.14,0.04)$ for
$(O,A,B,AB)$. These are the classical kidney-exchange simulation
frequencies in \citet[Table~1]{roth2007efficient}, rounded to two decimal
places. The donor ABO probabilities are $(0.45,0.40,0.11,0.04)$ for
$(O,A,B,AB)$; we use a similar donor-side margin so that blood types $O$
and $A$ dominate while keeping the model low-dimensional. The sensitization
probabilities are $\sigma_L=0.70$ and $\sigma_H=0.30$. This two-class split
keeps the benchmark coarse while preserving the empirically important fact
that highly sensitized patients are difficult to match; this difficulty is
also visible in waiting-time evidence from KPD pools \citep{holscher2018matchrates}.
Donor age probabilities are $(0.30,0.45,0.25)$ for age bands
$(30,45,60)$, and donor sex probabilities are $\gamma_F=0.55$ and
$\gamma_M=0.45$. These values are deliberately smoothed but are chosen to
reflect the broad empirical regularities that middle-aged donors are common
and women are overrepresented among living kidney donors
\citep{lentine2023adrkidney}.

New pairs arrive according to a Poisson process with rate $\lambda=10.0$.
This rate is not meant as a direct registry estimate. It is chosen to make
residual graphs contain enough simultaneous feasible edges for nontrivial
policy choices in the simulated benchmark. The same design goal also guides
the crossmatch and exit-rate choices described below.

The arrival distribution combines the above marginals with a KPD-entry
filter. The filter is $\kappa(b^P,b^D,s)=1$ if donor ABO type $b^D$ is
incompatible with patient ABO type $b^P$, $\kappa(b^P,b^D,s)=\eta_H$ if
the pair is ABO-compatible but $s=H$, and $\kappa(b^P,b^D,s)=0$ if the
pair is ABO-compatible and $s=L$, with $\eta_H=0.20$. This reflects the
clinical logic that KPD pools are primarily populated by blood-type or
crossmatch incompatible pairs \citep[p.~828]{roth2007efficient}. The
positive mass on ABO-compatible, highly sensitized pairs captures the fact
that immunologic incompatibility can also bring pairs into exchange; in a
founding-center NKR study, \citet{leeser2012living} report that
incompatibility was attributable to blood type in $54.4\%$ of enrolled
pairs and to donor-specific sensitization in $43.2\%$. Formally,
\begin{equation}
\rho\bigl(((b^P,s),(b^D,a,g))\bigr)=
\frac{
\pi^P_{b^P}\sigma_s\pi^D_{b^D}\alpha_a\gamma_g
}{Z}\times
\kappa(b^P,b^D,s),
\label{eq:kpd_arrival_distribution}
\end{equation}
where $Z$ normalizes $\rho$ to sum to one.

For two nodes $X_i=((b_i^P,s_i),(b_i^D,a_i,g_i))$ and
$X_j=((b_j^P,s_j),(b_j^D,a_j,g_j))$, a two-way exchange induces two
directed living-donor transplants. A directed transplant first requires ABO
compatibility; we use the standard compatibility rule that type $O$ donors
can donate to any ABO type, whereas type $O$ recipients can receive only
from type $O$ donors \citep{unos_abo_chart}. Conditional on ABO
compatibility, the transplant passes a virtual crossmatch with probability
$\phi_L=0.90$ for lower-sensitized patients and $\phi_H=0.50$ for highly
sensitized patients. These crossmatch probabilities are benchmark parameters. They preserve the ordering that highly sensitized
patients are harder to match, while keeping the simulated pool dense enough
for policy learning rather than almost always edgeless. The ordering is
consistent with the positive-crossmatch patterns reported in
\citet[p.~839]{roth2007efficient}.

Conditional on directed feasibility, the transplant receives an
edge-specific HLA match score $H\in\{0,25,50,75\}$ with probabilities
$(0.20,0.30,0.30,0.20)$. This coarse support covers the clinically
meaningful range emphasized by \citet{milner2016hla}, who compare outcomes
at $0$ and $75$ HLA match points and estimate how HLA quality affects
expected kidney life years. The directed transplant value uses their linear
KLY specification, omitting the biological-sibling term because KPD
exchanges are typically between unrelated donor--recipient pairs:
\begin{align}
u_{\mathrm{KLY}}(p,d;H)
&\coloneqq
23.465
+0.039H
-0.050a
\nonumber\\
&\quad{}
+0.303 \times \mathbf 1\{g=M\}.
\label{eq:kly_directed_utility}
\end{align}
The coefficients are the intercept, HLA-point effect, donor-age effect, and
donor-male effect reported by \citet{milner2016hla}. An undirected edge
between $i$ and $j$ is feasible if both directed transplants are feasible;
if so, its weight is the sum of the two directed KLY values:
\begin{align}
w_{ij}
&\coloneqq
u_{\mathrm{KLY}}\bigl((b_i^P,s_i),(b_j^D,a_j,g_j);H_{ij}\bigr)
\nonumber\\
&\quad{}
+u_{\mathrm{KLY}}\bigl((b_j^P,s_j),(b_i^D,a_i,g_i);H_{ji}\bigr).
\label{eq:kpd_edge_weight}
\end{align}
If either directed transplant is infeasible, the undirected edge is absent.

In the baseline KPD environment, unmatched nodes exit exogenously with
$\mu(x)=1.0$, $q(\exitmark\mid x)=1$, and $c(x)=0$ for all $x\in\X$.
The exit penalty remains zero to keep rewards focused on realized
transplants. Together with $\lambda=10.0$ and the crossmatch probabilities
above, this exit rate keeps the pool dynamic while still leaving enough
edges for meaningful matching choices.

The KPD input choice described in the main text affects only the value
network. The weights in \eqref{eq:kpd_edge_weight} remain in $W(G,m)$ and in
the forward-greedy score
$w_e+V_\theta(G\res\{e\})$. Donor age and sex are endpoint features, so
their KLY contributions are available to the network; $H_{ij}$ and $H_{ji}$
are the only edge-specific payoff components. In the architecture ablation,
supplying $w_{ij}$ as an edge feature produced essentially unchanged
evaluation reward but less stable training.

The scale of the KLY calibration is consistent with this result. A directed
transplant is worth at least
$23.465-0.050\times 60=20.465$ KLY, so every feasible two-way exchange is
worth at least $40.93$ KLY. By comparison, the combined edge-specific HLA
contribution ranges from $0$ to
$2\times0.039\times75=5.85$ KLY, while an exogenous exit yields zero reward.
Thus, in this benchmark, the main variation in continuation value is driven
by whether future feasible exchanges remain available and by exit risk,
rather than by fine HLA-based differences among feasible edges.

The informed-exit environment augments each node with an observable
critical flag $h\in\{0,1\}$. All arrivals enter with $h=0$. If $h=0$, the
node's clock rings at rate $\mu_0=1.0$ and deterministically changes the
flag to $h=1$ without redrawing or deleting incident edges. If $h=1$, the
node's clock rings at rate $\mu_1=100.0$ and triggers exogenous exit with
zero penalty. Edge sampling for new arrivals ignores the critical flag, so
critical and noncritical nodes use the same compatibility and value rules.
This makes the flag a pure survival annotation: the planner observes a
warning that a node is about to leave, but the node's matching opportunities
do not change at the flag transition. Finally, the continuous-time discount
rate is $\delta=0.002$, matching the value used in the simulation
configuration.

\end{document}